\documentclass[11pt,a4paper]{article}

\usepackage[utf8]{inputenc}
\usepackage[T1]{fontenc}
\usepackage[spanish,english]{babel}
\usepackage[a4paper,margin=2.5cm]{geometry}
\usepackage{amsmath,amssymb,amsthm}
\usepackage{graphicx}
\usepackage{xcolor}
\usepackage{booktabs}
\usepackage{microtype}
\usepackage{setspace}
\usepackage{titlesec}
\usepackage{fancyhdr}
\usepackage{hyperref}
\usepackage{lmodern}

\newtheorem{theorem}{Theorem}[section]
\newtheorem{Lemma}[theorem]{Lemma}
\newtheorem{Proposition}[theorem]{Proposition}

\theoremstyle{remark}

\newtheorem{Example}[theorem]{Example}

\hypersetup{
    colorlinks=true,
    linkcolor=blue,
    citecolor=blue,
    urlcolor=blue
}

\titleformat{\section}{\large\bfseries}{\thesection.}{0.5em}{}
\titleformat{\subsection}{\normalsize\bfseries}{\thesubsection.}{0.5em}{}

\begin{document}

\begin{titlepage}
    \centering
    \vspace{1.5cm}
    {\huge\bfseries Step-Size Decay and Structural Stagnation in Greedy Sparse Learning\par}
    \vspace{1cm}
    {\Large Pablo M. Berná\par}
    \vspace{0.3cm}
    {\normalsize Departamento de Matemáticas, CUNEF Universidad\par}
    {\normalsize Madrid, Spain\par}
    {\normalsize \texttt{pablo.berna@cunef.edu}\par}
    \vfill
    {\large \today\par}
\end{titlepage}

\begin{abstract}
Greedy algorithms are central to sparse approximation and stage-wise
learning methods such as matching pursuit and
boosting. It is known that
the Power-Relaxed Greedy Algorithm with step sizes $m^{-\alpha}$ may fail
to converge when $\alpha>1$ in general Hilbert spaces. In this work, we
revisit this phenomenon from a sparse learning perspective. We study
realizable regression problems with controlled feature coherence and
derive explicit lower bounds on the residual norm, showing that
over-decaying step-size schedules induce structural stagnation even in
low-dimensional sparse settings. Numerical experiments confirm the
theoretical predictions and illustrate the role of feature coherence.
Our results provide insight into step-size design in greedy sparse learning.
\end{abstract}

\tableofcontents
\vspace{1em}

\section{Introduction}

Greedy algorithms play a central role in approximation theory and, more recently, in several areas of machine learning. Their basic principle is simple: starting from a dictionary of atoms, one iteratively builds an approximation by selecting, at each step, the element that is most correlated with the current residual. This idea goes back at least to Matching Pursuit \cite{ref-mallat} and has since become a standard tool in sparse approximation \cite{ref-tropp,ref-temlyakov-book}. Variants of this scheme also appear in orthogonal matching pursuit \cite{ref-omp}, forward stage-wise regression and boosting methods \cite{ref-friedman,ref-lars}, as well as in projection-free optimization algorithms of the Frank--Wolfe type \cite{ref-fw,ref-jaggi}.

In the Hilbert space setting, a classical formulation is the Relaxed Greedy Algorithm (RGA) introduced by DeVore and Temlyakov \cite{ref-devore}. The RGA combines greedy atom selection with a convex relaxation step of size $1/m$ at iteration $m$. Under suitable atomic norm assumptions, this schedule guarantees convergence and yields optimal approximation rates. From a structural point of view, the factor $1/m$ plays a crucial role: it balances the aggressiveness of the greedy choice with a sufficiently large cumulative correction.

A natural modification replaces the step size $1/m$ by $1/m^\alpha$, leading to the so-called Power-Relaxed Greedy Algorithm (PRGA). It is known that convergence persists when $\alpha \le 1$, while for $\alpha>1$ non-convergence may occur; see \cite{ref-garcia,ref-prga}. From a purely functional-analytic perspective, this settles the general question. However, the implications of this phenomenon in sparse learning contexts have not been explored in detail.

Let us briefly recall the definitions. Let $H$ be a real Hilbert space with inner product $\langle\cdot,\cdot\rangle$ and norm $\|\cdot\|$, and let $\mathcal D \subset H$ be a symmetric dictionary of unit-norm elements. Given $f \in H$, we construct approximations $(f_m)_{m\ge 0}$ with residuals $r_m = f - f_m$.

\medskip
\noindent
\textbf{Relaxed Greedy Algorithm (RGA, \cite{ref-devore}).}
Let $H$ be a Hilbert space and $\mathcal D\subset H$ a symmetric dictionary of unit vectors.
Given $f\in H$, set $f_0=0$ and $r_0=f$.

\smallskip
\noindent
\emph{Step $m=1$.} Choose
\[
g_1\in\mathcal D \quad \text{such that} \quad
\langle r_0,g_1\rangle=\sup_{g\in\mathcal D}\langle r_0,g\rangle,
\]
and define
\[
f_1 := \langle r_0,g_1\rangle g_1,
\qquad
r_1 := f-f_1 .
\]

\smallskip
\noindent
\emph{Step $m\ge2$.} Choose
\[
g_m\in\mathcal D \quad \text{such that} \quad
\langle r_{m-1},g_m\rangle=\sup_{g\in\mathcal D}\langle r_{m-1},g\rangle,
\]
and update
\[
f_m = \left(1-\frac{1}{m}\right) f_{m-1} + \frac{1}{m}\, g_m,
\qquad
r_m = f - f_m .
\]

\medskip
\noindent
\textbf{Power-Relaxed Greedy Algorithm (PRGA, \cite{ref-garcia}).}
Let $\alpha>0$ and define $\lambda_m = m^{-\alpha}$.
We again set $f_0=0$ and $r_0=f$.

\smallskip
\noindent
\emph{Step $m=1$.} Choose
\[
g_1\in\mathcal D \quad \text{such that} \quad
\langle r_0,g_1\rangle=\sup_{g\in\mathcal D}\langle r_0,g\rangle,
\]
and define
\[
f_1 := \langle r_0,g_1\rangle g_1,
\qquad
r_1 := f-f_1 .
\]

\smallskip
\noindent
\emph{Step $m\ge2$.} Choose
\[
g_m\in\mathcal D \quad \text{such that} \quad
\langle r_{m-1},g_m\rangle=\sup_{g\in\mathcal D}\langle r_{m-1},g\rangle,
\]
and update
\[
f_m = (1-\lambda_m) f_{m-1} + \lambda_m g_m,
\qquad
r_m = f - f_m .
\]

When $\alpha=1$, PRGA reduces to the classical RGA.

\medskip

From the viewpoint of machine learning, the choice of step-size schedule is often interpreted through the lens of stability: in gradient-based methods, decreasing step sizes are typically associated with improved convergence properties, and classical stochastic approximation theory requires $\sum_m \lambda_m = \infty$ for convergence \cite{ref-robbins}. Greedy methods, however, differ in nature from gradient descent: they rely on directional selection rather than smooth descent along the gradient. It is therefore not a priori clear how aggressive step-size decay affects their long-term behavior.

Greedy sparse adaptive algorithms have also been extensively studied in signal processing and online learning contexts, where adaptive step-size strategies and sparsity-promoting updates play a central role; see, for example, \cite{RefIEEE6151185}. In such settings, step-size decay is typically motivated by stability and tracking considerations.

This leads to a concrete question: can a rapidly decaying schedule, such as $m^{-\alpha}$ with $\alpha>1$, prevent convergence even in simple sparse regression problems?

In this work we address this question from a sparse learning perspective. We consider a realizable regression setting with two atoms of prescribed coherence and analyze the behavior of PRGA when $\alpha>1$. Our main contribution is a quantitative stagnation result showing that the residual norm remains bounded away from zero. We derive an explicit lower bound depending on the feature coherence and on the infinite product
\[
P_\alpha = \prod_{k=2}^\infty \left(1-\frac{1}{k^\alpha}\right),
\]
which is strictly positive for $\alpha>1$.

This demonstrates that overly fast step-size decay can induce a structural stagnation phenomenon even in low-dimensional, noiseless, perfectly realizable sparse regression models. The phenomenon is purely algorithmic: it does not stem from statistical complexity, lack of model expressivity, or adverse data distributions, but from the fact that when $\sum_m m^{-\alpha}<\infty$ the cumulative corrective mass introduced by the relaxation step is insufficient to eliminate the residual.

In addition to the theoretical analysis, we provide numerical experiments illustrating the dependence of the stagnation level on the coherence parameter and validating the predicted dependence on the product $P_\alpha$.

Taken together, these results clarify the role of step-size design in greedy sparse approximation and highlight a structural distinction between greedy schemes and classical gradient-based methods.

\medskip

The remainder of the paper is organized as follows. 
Section~2 presents the main stagnation result for PRGA in a two-atom sparse regression setting. 
Section~3 analyzes the dependence of the stagnation level on the coherence parameter and derives the explicit lower bound involving the product $P_\alpha$. 
Section~4 presents numerical experiments validating the theoretical predictions.

\section{Main theoretical result}

In this section we establish a quantitative stagnation result for the
Power--Relaxed Greedy Algorithm (PRGA) in a simple sparse regression
setting involving two atoms. The result shows that when the step-size
schedule decays too rapidly ($\alpha>1$), the algorithm cannot fully
eliminate the residual even in a realizable low-dimensional problem.

\begin{theorem}\label{main}
Consider the Euclidean space $(\mathbb R^n, \|\cdot\|_2)$. 
Let $\alpha>1$ and define $\lambda_m = m^{-\alpha}$. 
Let $x_1,x_2\in\mathbb R^n$ be unit vectors,
$\|x_1\|_2=\|x_2\|_2=1$, with coherence
\[
\mu := |\langle x_1,x_2\rangle| \in [0,1).
\]

Consider the symmetric dictionary
\[
\mathcal D = \{\pm x_1,\pm x_2\},
\]
and the realizable target
\[
y = (1-b)x_1 + b x_2,
\qquad b\in(0,1/2).
\]

Run the Power--Relaxed Greedy Algorithm (PRGA) over $\mathcal D$
with initialization $f_0=0$ and residual $r_0=y$.

Then the residual cannot converge to zero. In fact,
\[
\inf_{m\ge1}\|r_m\|_2
\;\ge\;
b(1-\mu)\sqrt{\frac{1+\mu}{2}}\,P_\alpha
\;>\;0,
\]
where
\[
P_\alpha := \prod_{k=2}^{\infty}\left(1-\frac{1}{k^\alpha}\right)\in(0,1).
\]

Moreover, if $\mu\in[0,1/2]$,
\[
\inf_{m\ge1}\|r_m\|_2
\;\ge\;
\frac{bP_\alpha}{\sqrt2}
-
\frac{3bP_\alpha}{4\sqrt2}\,\mu.
\]
\end{theorem}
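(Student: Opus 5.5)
The plan is to combine a step-wise multiplicative lower bound on the residual with an explicit computation of the first residual. Write every iterate as $f_m=a_mx_1+c_mx_2$, so that
\[
r_m=u_mx_1+v_mx_2,\qquad u_m=1-b-a_m,\quad v_m=b-c_m .
\]
Since the Gram matrix of $\{x_1,x_2\}$ has eigenvalues $1\pm\mu$, we have
\[
\|r_m\|_2^2\ge (1-\mu)(u_m^2+v_m^2).
\]
Using this comparison and its upper analogue with $1+\mu$, norm statements about $r_m$ become statements about the coefficient pair $(u_m,v_m)$, at a cost of factors such as $(1-\mu)$ and $\sqrt{1+\mu}$. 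I expect the constant $b(1-\mu)\sqrt{(1+\mu)/2}$ in the statement to come from this conversion.

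\textbf{First step.} I compute $g_1$ explicitly. With $\varepsilon=\mathrm{sign}\langle x_1,x_2\rangle$ we have $\langle y,x_1\rangle=(1-b)+\varepsilon b\mu$ and $\langle y,x_2\rangle=b+\varepsilon(1-b)\mu$. For $b<1/2$ and $\mu<1$ a direct comparison gives $|\langle y,x_1\rangle|\ge|\langle y,x_2\rangle|$; for example, when $\varepsilon=+1$ the difference is $(1-2b)(1-\mu)>0$. Hence $g_1=x_1$, $c_1=0$ and $v_1=b$. The residual $r_1$ is the component of $y$ orthogonal to $x_1$, namely $b(x_2-\langle x_1,x_2\rangle x_1)$, so
\[
\|r_1\|_2=b\sqrt{1-\mu^2}=b\sqrt{(1-\mu)(1+\mu)} .
\]
This already dominates $b(1-\mu)\sqrt{(1+\mu)/2}$, because $\sqrt{1-\mu}\ge(1-\mu)/\sqrt2$.

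\textbf{Propagation for $m\ge2$.} The update gives
\[
r_m=(1-\lambda_m)r_{m-1}+\lambda_m(y-g_m).
\]
Unrolling it,
\[
r_m=\Big(\prod_{k=2}^m(1-\lambda_k)\Big)r_1+\sum_{k=2}^m\lambda_k\Big(\prod_{j=k+1}^m(1-\lambda_j)\Big)(y-g_k).
\]
The first term has norm at least $P_\alpha\|r_1\|_2$, since every partial product is at least $P_\alpha$; positivity of $P_\alpha$ follows from $\sum k^{-\alpha}<\infty$ via $\log(1-t)\ge -2t$ for $k\ge2$. I will not estimate the sum crudely. Instead I will project $r_m$ onto a fixed unit direction $e$ and argue that each correction term $\lambda_k(\cdots)(y-g_k)$ has nonnegative $e$-component whenever the greedy rule picks $g_k$.

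The natural candidate for $e$ is the unit vector orthogonal to $x_1$ in $\mathrm{span}\{x_1,x_2\}$, oriented so that $\langle r_1,e\rangle>0$. Then $\langle y-g_k,e\rangle=b\sqrt{1-\mu^2}-\langle g_k,e\rangle$. This is $\ge0$ when $g_k=\pm x_1$. When $g_k=\pm x_2$, it can be negative only for $g_k=+x_2$ (with the appropriate sign convention). The greedy criterion chooses $+x_2$ only when $\langle r_{k-1},x_2\rangle\ge|\langle r_{k-1},x_1\rangle|$, and in that regime the $e$-component of $r_{k-1}$ is large enough to absorb the loss. If this invariant holds, then
\[
\langle r_m,e\rangle\ge\prod_{k\le m}(1-\lambda_k)\,\langle r_1,e\rangle\ge P_\alpha\|r_1\|_2 ,
\]
which gives the first bound. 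If the sign invariant fails for $e$, I will switch to the direction bisecting $x_2$ and $x_2-\mu x_1$, or to the tilted functional $\ell(u,v)=v-\mu u$. Such a choice is where I expect the weaker constant $(1-\mu)/\sqrt2$ to come from.

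\textbf{Main obstacle and second bound.} The hard step is exactly this invariant: excluding that a step with $g_k=+x_2$ drives the protected component down by more than the factor $(1-\lambda_k)$. I would handle it by a two-case analysis on which atom attains the supremum, writing the selection inequality in the coordinates $(u,v)$. The second displayed bound then follows from the first by elementary calculus on $[0,1/2]$. One shows that $h(\mu)=(1-\mu)\sqrt{(1+\mu)/2}$ satisfies $h(\mu)\ge\frac1{\sqrt2}\bigl(1-\tfrac34\mu\bigr)$ on $[0,1/2]$, using that $h$ is concave with $h(0)=1/\sqrt2$ and checking the endpoint $\mu=1/2$.
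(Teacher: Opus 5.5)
Your first step is correct. The propagation step, however, breaks at the very first iteration. For $\langle x_1,x_2\rangle=\mu\ge0$ you have $r_1=b(x_2-\mu x_1)$. So $\langle r_1,x_1\rangle=0<\langle r_1,x_2\rangle$, and the greedy rule picks $g_2=+x_2$, which is exactly the case you hoped to control. With $e=r_1/\|r_1\|_2$ one has $\langle y-x_2,e\rangle=(b-1)\sqrt{1-\mu^2}$, hence
\[
\langle r_2,e\rangle=(1-\lambda_2)\,b\sqrt{1-\mu^2}+\lambda_2(b-1)\sqrt{1-\mu^2}=(b-\lambda_2)\sqrt{1-\mu^2}.
\]
The ratio $\langle r_2,e\rangle/\langle r_1,e\rangle=1-\lambda_2/b$ is always smaller than $1-\lambda_2$. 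Moreover $\langle r_2,e\rangle<0$ as soon as $2^{-\alpha}>b$ (e.g.\ $b=1/4$, $\alpha<2$), so the loss is not absorbed. Your fallbacks (the bisector of $x_2$ and $x_2-\mu x_1$, and the functional $v-\mu u$) reduce to $e$ when $\mu=0$ and fail in the same way. Your target $\langle r_m,e\rangle\ge P_\alpha\|r_1\|_2$ is also stronger than the theorem by the factor $\sqrt{2/(1-\mu)}$. It is in fact false, because the theorem's constant is attained in the limit (see below).

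The missing idea is to project onto the normal of the face $[x_1,x_2]$ of $\mathrm{conv}(\mathcal D)$ that contains $y$, namely $\nu:=(x_1+x_2)/\sqrt{2(1+\mu)}$. Since $\langle g,\nu\rangle\le\sqrt{(1+\mu)/2}=\langle y,\nu\rangle$ for every $g\in\mathcal D$, every correction $\lambda_k(y-g_k)$ has nonnegative $\nu$-component, whichever atom is chosen. Hence $\langle r_m,\nu\rangle\ge(1-\lambda_m)\langle r_{m-1},\nu\rangle$, with no analysis of the greedy rule after step $1$. The theorem's constant is exactly $\langle r_1,\nu\rangle=b(1-\mu)\sqrt{(1+\mu)/2}$; it does not come from the Gram comparison, which is never needed. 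Then $\|r_m\|_2\ge\langle r_m,\nu\rangle$ concludes. This is the paper's proof, phrased there as $1-\|f_m\|_{\mathcal A}\ge(1-\lambda_m)(1-\|f_{m-1}\|_{\mathcal A})$ plus duality against this same vector. No other direction works atom by atom: for $\ell(u,v)=pu+qv$ in your coefficient coordinates, $\ell(y-x_1)=b(q-p)$ and $\ell(y-x_2)=(1-b)(p-q)$, so both are nonnegative only if $p=q$. (When $\langle x_1,x_2\rangle<0$, the same $\nu\propto x_1+x_2$ gives the larger constant $b(1+\mu)\sqrt{(1-\mu)/2}$.)

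Your plan for the second bound fails at the endpoint check: $h(1/2)=\sqrt3/4\approx0.433<\frac{5}{8\sqrt2}\approx0.442$. On $[0,1/2]$, the inequality $(1-\mu)\sqrt{1+\mu}\ge1-\frac34\mu$ holds only for $\mu\le\mu_0:=\frac{25-\sqrt{113}}{32}\approx0.449$. The paper's derivation uses $1-\frac{3\mu}{4}-\frac{\mu^2}{4}\ge1-\frac{3\mu}{4}$, which has the wrong direction. No better argument can fix this, because the first bound is sharp. Indeed $\langle y-x_i,\nu\rangle=0$, and the greedy rule only selects $+x_1$ or $+x_2$ while $\langle r_{m-1},\nu\rangle>0$. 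Hence $\langle r_m,\nu\rangle=b(1-\mu)\sqrt{(1+\mu)/2}\prod_{k=2}^m(1-\lambda_k)$ exactly. Now take $\alpha=1.1$, $b=1/4$, $\mu=1/2$. The component of $r_m$ along $x_1-x_2$ changes sign already at $m=2$ and then oscillates around $0$ with amplitude $O(\lambda_m)$. So there $\inf_m\|r_m\|_2$ equals the first bound, and the ``Moreover'' inequality as stated is false near $\mu=1/2$. Your concavity argument does prove it on $[0,\mu_0]$. It also proves it on all of $[0,1/2]$ if $\frac34$ is replaced by $\frac45$.
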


\begin{proof}

Since the dictionary is symmetric, if $\langle x_1,x_2\rangle<0$ we may replace $x_2$ by $-x_2$
(which belongs to $\mathcal D$) and simultaneously rewrite $y=(1-b)x_1+b(-x_2)$ with the same $b$.
Therefore, without loss of generality we assume
\[
\langle x_1,x_2\rangle=\mu\in[0,1).
\]
In order to quantify the residual component that cannot be eliminated by convex combinations of dictionary atoms under finite accumulation, it is natural to measure size relative to the convex hull of the dictionary. Let $
B := \mathrm{conv}(\mathcal D).$ Since $\mathcal D$ is symmetric, $B$ is a symmetric convex body in $\mathrm{span}\{x_1,x_2\}$. We define its Minkowski functional (atomic norm)\footnote{From a machine learning perspective, the atomic norm associated with $\mathcal D$
measures the minimal “mass” of dictionary atoms required to represent a vector.
In particular, when the cumulative step size $\sum_m \lambda_m$ is finite,
the iterates $f_m$ remain confined to a scaled copy of the convex hull
$\mathrm{conv}(\mathcal D)$.
Thus, $\|\cdot\|_{\mathcal A}$ naturally quantifies the effective region reachable
by the algorithm under finite accumulation.} by
\[
\|u\|_{\mathcal A} := \inf\{t>0 : u \in tB\}.
\]

Then $B = \{u : \|u\|_{\mathcal A} \le 1\},$ and $\|\cdot\|_{\mathcal A}$ defines a norm on $\mathrm{span}\{x_1,x_2\}$. Moreover, by construction,
\[
\|g\|_{\mathcal A} = 1
\quad \text{for all } g \in \mathcal D.
\]
Define $s_m:=\|f_m\|_{\mathcal A}$ and $u_m:=1-s_m$.
For $m\ge2$, using the PRGA update
$f_m=(1-\lambda_m)f_{m-1}+\lambda_m g_m$ and $\|g_m\|_{\mathcal A}=1$, we have
\[
s_m
=\|(1-\lambda_m)f_{m-1}+\lambda_m g_m\|_{\mathcal A}
\le (1-\lambda_m)\|f_{m-1}\|_{\mathcal A}+\lambda_m\|g_m\|_{\mathcal A}
=(1-\lambda_m)s_{m-1}+\lambda_m .
\]
Equivalently, for $m\ge2$,
\[
u_m=1-s_m \ge (1-\lambda_m)(1-s_{m-1})=(1-\lambda_m)u_{m-1}.
\]
Since for $m\ge2$ we have $u_m \ge (1-\lambda_m)u_{m-1}$, we can write the first steps explicitly.
For $m=2$,
\[
u_2 \ge (1-\lambda_2)u_1.
\]
For $m=3$,
\[
u_3 \ge (1-\lambda_3)u_2
\ge (1-\lambda_3)(1-\lambda_2)u_1.
\]
Proceeding inductively, for every $m\ge2$,
\begin{equation}\label{eq:deficit-product}
u_m \ge u_1\prod_{k=2}^{m}(1-\lambda_k)
= u_1\prod_{k=2}^{m}\left(1-\frac{1}{k^\alpha}\right).
\end{equation}

In this part of the proof, we first show the first greedy choice is $g_1=x_1$. Indeed,
\[
\langle y,x_1\rangle=(1-b)+b\mu,\qquad
\langle y,x_2\rangle=b+(1-b)\mu,
\]
and thus
\[
\langle y,x_1\rangle-\langle y,x_2\rangle=(1-2b)(1-\mu)>0
\quad\text{since }b<1/2\text{ and }\mu<1.
\]
Therefore $g_1=x_1$ and $f_1= x_1\langle y,x_1\rangle
=\langle y,x_1\rangle\,x_1=((1-b)+b\mu)\,x_1.$

Because $\|t x_1\|_{\mathcal A}=|t|$ (as $x_1$ is an atom), we obtain
\[
s_1=\|f_1\|_{\mathcal A}= (1-b)+b\mu
\quad\Longrightarrow\quad
u_1=1-s_1=b(1-\mu).
\]
Plugging into \eqref{eq:deficit-product} gives
\begin{equation}\label{eq:um-lb}
1-\|f_m\|_{\mathcal A} = u_m \ge b(1-\mu)\prod_{k=2}^{m}\left(1-\frac{1}{k^\alpha}\right).
\end{equation}

Let $\|\cdot\|_{\mathcal A}^\ast$ denote the dual norm on $\mathrm{span}\{x_1,x_2\}$.
For a symmetric atomic norm generated by $\mathcal D$, one has the explicit dual characterization
\[
\|v\|_{\mathcal A}^\ast=\max_{g\in\mathcal D}\langle v,g\rangle=\max\{|\langle v,x_1\rangle|,|\langle v,x_2\rangle|\}.
\]
Define the unit vector
\[
v := \frac{x_1+x_2}{\|x_1+x_2\|_2},\qquad \|x_1+x_2\|_2=\sqrt{2(1+\mu)}.
\]
The last equality is derivated from here:
\[
\|x_1+x_2\|_2^2=\|x_1\|_2^2+\|x_2\|_2^2+2\langle x_1,x_2\rangle=2+2\mu.
\]
Then
\[
\langle v,x_1\rangle=\langle v,x_2\rangle=\sqrt{\frac{1+\mu}{2}},
\]
hence
\[
\|v\|_{\mathcal A}^\ast=\sqrt{\frac{1+\mu}{2}}.
\]
By duality, for any $m$,
\[
\langle f_m,v\rangle \le \|f_m\|_{\mathcal A}\,\|v\|_{\mathcal A}^\ast
=\|f_m\|_{\mathcal A}\sqrt{\frac{1+\mu}{2}}.
\]
On the other hand,
\[
\langle y,v\rangle
=(1-b)\langle x_1,v\rangle + b\langle x_2,v\rangle
=\sqrt{\frac{1+\mu}{2}}.
\]
Therefore,
\[
\langle r_m,v\rangle=\langle y-f_m,v\rangle
\ge \sqrt{\frac{1+\mu}{2}}\Bigl(1-\|f_m\|_{\mathcal A}\Bigr).
\]
Since $\|v\|_2=1$, Cauchy--Schwarz gives $\|r_m\|_2\ge \langle r_m,v\rangle$, hence using
\eqref{eq:um-lb} we obtain, for all $m\ge1$,
\[
\|r_m\|_2 \ge
\sqrt{\frac{1+\mu}{2}}\;b(1-\mu)\prod_{k=2}^{m}\left(1-\frac{1}{k^\alpha}\right).
\]

Finally, for $\alpha>1$, since $\prod_{k=2}^{m}\left(1-\frac{1}{k^\alpha}\right)$ is bigger than the infinite product
$P_\alpha=\prod_{k=2}^\infty(1-1/k^\alpha)$ that converges to a number in $(0,1)$
(see Lemma 3.1 in \cite{ref-prga}), taking $\inf_{m\ge1}$ we obtain
\[
\inf_{m\ge1}\|r_m\|_2 \ge b(1-\mu)\sqrt{\frac{1+\mu}{2}}\,P_\alpha>0.
\]

\noindent\textbf{Case for $\mu\in [0,1/2]$.} As we know that for $t\in[0,1]$ we have $\sqrt{1+t}\ge 1+\frac{t}{2}-\frac{t^2}{2}$, for $\mu\in[0,1/2]$ we have
\[
\sqrt{1+\mu}\ge 1+\frac{\mu}{2}-\frac{\mu^2}{2}.
\]
Since $\mu\in[0,1/2]$, we have $\mu^2\le \mu/2$, hence
\[
-\frac{\mu^2}{2}\ge -\frac{1}{2}\cdot\frac{\mu}{2}=-\frac{\mu}{4},
\]
and therefore
\[
\sqrt{1+\mu}\ge 1+\frac{\mu}{2}-\frac{\mu}{4}=1+\frac{\mu}{4}.
\]
Multiplying by $(1-\mu)\ge 0$ yields
\[
(1-\mu)\sqrt{1+\mu}\ge (1-\mu)\left(1+\frac{\mu}{4}\right).
\]
Expanding the product gives
\[
(1-\mu)\left(1+\frac{\mu}{4}\right)
=1+\frac{\mu}{4}-\mu-\frac{\mu^2}{4}
=1-\frac{3\mu}{4}-\frac{\mu^2}{4}.
\]
Finally, since $\mu^2\ge 0$, we conclude
\[
1-\frac{3\mu}{4}-\frac{\mu^2}{4}\ge 1-\frac{3\mu}{4},
\]
and hence
\[
(1-\mu)\sqrt{1+\mu}\ge 1-\frac{3\mu}{4},
\]
and multiplying by $b/\sqrt2$ yields the stated linear bound in $\mu$.
\end{proof}

Our theorem shows that over-decaying step-size schedules ($\alpha>1$) introduce a structural bias in greedy sparse learning methods, preventing full recovery even in realizable low-dimensional problems with small feature coherence. This highlights a fundamental distinction between greedy approximation schemes and classical gradient methods, and provides theoretical guidance for step-size design in stage-wise learning algorithms.

The stagnation phenomenon observed in Theorem~\ref{main} can be
understood in a broader geometric framework.
The following result relates the Euclidean norm and the atomic norm
for vectors supported on a small set of incoherent atoms.
This provides a mechanism through which lower bounds in atomic norm
translate into Euclidean residual floors in sparse settings.

\begin{Proposition}\label{prop:s-sparse-floor}
Let $H=\mathbb R^n$ with the Euclidean norm $\|\cdot\|_2$, and let
$S=\{x_{i_1},\dots,x_{i_s}\}\subset\mathcal D$ be a set of $s$ unit atoms
with pairwise coherence
\[
\mu_S:=\max_{p\neq q}|\langle x_{i_p},x_{i_q}\rangle|
<\frac{1}{s-1}.
\]

Assume that $y\in\mathrm{span}(S)$ and consider iterates
$f_m\in\mathrm{span}(S)$.
Let $r_m:=y-f_m$ and let $\|\cdot\|_{\mathcal A}$ be the atomic norm
induced by the symmetric dictionary $\mathcal D$.

Then, for every $m\ge0$,
\[
\|r_m\|_2
\;\ge\;
\frac{\sqrt{1-(s-1)\mu_S}}{\sqrt{s}}\,
\Bigl(\|y\|_{\mathcal A}-\|f_m\|_{\mathcal A}\Bigr).
\]
\end{Proposition}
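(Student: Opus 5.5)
The plan is to reduce the statement to a comparison between the atomic norm and the Euclidean norm on $\mathrm{span}(S)$. By the triangle inequality for $\|\cdot\|_{\mathcal A}$,
\[
\|y\|_{\mathcal A}-\|f_m\|_{\mathcal A}\le \|y-f_m\|_{\mathcal A}=\|r_m\|_{\mathcal A}.
\]
So it suffices to prove that every $u\in\mathrm{span}(S)$ satisfies
\[
\|u\|_{\mathcal A}\le \frac{\sqrt{s}}{\sqrt{1-(s-1)\mu_S}}\,\|u\|_2 .
\]
This uses $r_m\in\mathrm{span}(S)$, which holds because both $y$ and $f_m$ lie there. If the left-hand side of the proposition is negative, there is nothing to prove.

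\textbf{Step 1: an $\ell_1$ bound on the atomic norm.} Write $u=\sum_{p=1}^s c_p x_{i_p}$. Since $\pm x_{i_p}\in\mathcal D$ and $\|g\|_{\mathcal A}\le 1$ for every $g\in\mathcal D$, the triangle inequality gives
\[
\|u\|_{\mathcal A}\le \sum_p |c_p| = \|c\|_1 .
\]
Equivalently, $u/\|c\|_1$ is a convex combination of $\pm x_{i_p}$ and therefore lies in $B$. Cauchy--Schwarz in $\mathbb R^s$ then gives $\|c\|_1\le \sqrt{s}\,\|c\|_2$.

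\textbf{Step 2: a Gram-matrix lower bound.} Let $G$ be the Gram matrix of $S$, so $\|u\|_2^2=c^\top G c$. Its diagonal is $1$ and its off-diagonal entries have modulus at most $\mu_S$. By Gershgorin's theorem every eigenvalue of $G$ is at least $1-(s-1)\mu_S$, which is strictly positive by hypothesis. In particular the atoms of $S$ are linearly independent, so the coefficients $c$ are uniquely determined, and
\[
\|u\|_2^2\ge (1-(s-1)\mu_S)\,\|c\|_2^2 .
\]
Combining the two steps yields
\[
\|u\|_{\mathcal A}\le \sqrt{s}\,\|c\|_2\le \frac{\sqrt{s}}{\sqrt{1-(s-1)\mu_S}}\|u\|_2 .
\]
Applying this with $u=r_m$ and rearranging gives the claimed inequality for every $m\ge0$.

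The main obstacle is Step 2: controlling $\|c\|_2$ by $\|u\|_2$ requires the minimum eigenvalue of $G$. Gershgorin handles this cleanly under the stated condition $\mu_S<1/(s-1)$, so the condition is exactly what makes the argument work. The remaining care points are that the atomic norm is taken with respect to all of $\mathcal D$, where only the upper bound by $\|c\|_1$ is needed, and the trivial case where $\|y\|_{\mathcal A}-\|f_m\|_{\mathcal A}\le 0$.
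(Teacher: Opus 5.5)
Your proposal is correct and follows essentially the same route as the paper: the triangle inequality reduces the claim to the comparison $\|u\|_{\mathcal A}\le \|c\|_1\le\sqrt{s}\,\|c\|_2$ on $\mathrm{span}(S)$, and Gershgorin's bound $\lambda_{\min}(G)\ge 1-(s-1)\mu_S$ then converts $\|c\|_2$ into $\|u\|_2$. Your extra remarks, on the linear independence of $S$ and on the trivial case where $\|y\|_{\mathcal A}-\|f_m\|_{\mathcal A}\le 0$, are harmless refinements of that same argument.
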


\begin{proof}
Since $g_m\in \mathrm{span}(S)$ by assumption and $f_0=0$, the recursion
$f_m=(1-\lambda_m)f_{m-1}+\lambda_m g_m$ implies $f_m\in\mathrm{span}(S)$ and hence
$r_m=y-f_m\in\mathrm{span}(S)$ for all $m$.

Fix $u\in\mathrm{span}(S)$ and write $u=\sum_{j=1}^s a_j x_{i_j}$ with $a\in\mathbb R^s$.
Let $G$ be the Gram matrix of $S$, $G_{pq}=\langle x_{i_p},x_{i_q}\rangle$.
By Gershgorin's circle theorem (\cite{HJ}),
\[
\lambda_{\min}(G)\ge 1-(s-1)\mu_S>0,
\]
so
\[
\|u\|_2^2=a^\top G a\ge (1-(s-1)\mu_S)\|a\|_2^2.
\]
On the other hand, for a symmetric dictionary the atomic norm satisfies
$\|u\|_{\mathcal A}\le \|a\|_1\le \sqrt{s}\|a\|_2$.
Combining both inequalities yields the norm comparison
\[
\|u\|_2\ge \frac{\sqrt{1-(s-1)\mu_S}}{\sqrt{s}}\,\|u\|_{\mathcal A},
\qquad u\in\mathrm{span}(S).
\]
Applying this to $u=r_m$ gives
\[
\|r_m\|_2\ge \frac{\sqrt{1-(s-1)\mu_S}}{\sqrt{s}}\,\|r_m\|_{\mathcal A}.
\]
Finally, the triangle inequality for $\|\cdot\|_{\mathcal A}$ gives
\[
\|y\|_{\mathcal A}\le \|f_m\|_{\mathcal A}+\|r_m\|_{\mathcal A}
\quad\Longrightarrow\quad
\|r_m\|_{\mathcal A}\ge \|y\|_{\mathcal A}-\|f_m\|_{\mathcal A}.
\]
Combining the last two displays proves the bound.
\end{proof}

\begin{Example}[Orthogonal $s$-sparse target]
Let $S=\{x_1,\dots,x_s\}$ be orthonormal, so $\mu_S=0$, and take
\[
y=\frac1s\sum_{j=1}^s x_j .
\]
Then $\|y\|_{\mathcal A}=1$, and the proposition yields
\[
\|r_m\|_2\ge \frac{1}{\sqrt{s}}\bigl(1-\|f_m\|_{\mathcal A}\bigr).
\]

Thus, any mechanism (such as the product obstruction in
Theorem~\ref{main} for $\alpha>1$) that enforces
$\sup_m\|f_m\|_{\mathcal A}<1$ produces a strictly positive
Euclidean residual floor whose magnitude scales as $1/\sqrt{s}$.
\end{Example}

\section{Relation to other greedy learning methods}

Although the analysis in this work is carried out for the Power--Relaxed
Greedy Algorithm (PRGA), the stagnation mechanism identified in
Theorem~\ref{main} is not specific to this algorithm.
Rather, it arises from a structural property shared by many
stage-wise greedy learning procedures.

Consider iterative methods that construct approximations of the form
\[
f_m = (1-\lambda_m)f_{m-1} + \lambda_m g_m ,
\]
where $g_m$ is selected from a dictionary $\mathcal D$
according to a greedy criterion (for instance maximizing correlation
with the current residual), and $\lambda_m$ is a step-size parameter.
Such update rules appear in a wide range of algorithms in approximation
theory, sparse learning, and machine learning.

A key structural quantity governing the expressive power of these
methods is the cumulative accumulation of updates
\[
A_\infty := \sum_{m=1}^{\infty} \lambda_m .
\]

If $A_\infty=\infty$, the algorithm has an unbounded cumulative
corrective capacity and can, in principle, continue reducing the
residual indefinitely.
However, when $A_\infty<\infty$ (as happens, for example, when
$\lambda_m=m^{-\alpha}$ with $\alpha>1$),
the total mass of corrections applied by the algorithm is finite.
In that case the sequence of iterates remains confined to a bounded
subset of the convex hull of the dictionary atoms, which may lead to
persistent residual bias even in realizable problems.

This mechanism suggests that the phenomenon described in this paper
should be interpreted as a structural limitation of stage-wise greedy
learning under rapidly decaying step sizes.

\paragraph{\textit{Boosting and stage-wise additive models.}}

Boosting algorithms construct predictors as additive combinations of
weak learners, typically updating the model through stage-wise
corrections aligned with the residual.
When the learning rate is chosen too small or decays too rapidly,
the cumulative contribution of new learners may become insufficient
to eliminate residual components, potentially producing persistent bias.

\paragraph{\textit{Frank--Wolfe and projection-free optimization.}}

The Frank--Wolfe algorithm also generates iterates of the form
\[
f_m = (1-\gamma_m)f_{m-1} + \gamma_m s_m ,
\]
where $s_m$ is chosen by linear minimization over a feasible set.
Although classical Frank--Wolfe step-size rules satisfy
$\sum_m \gamma_m=\infty$,
variants employing predetermined or aggressively decaying schedules
may exhibit a similar finite-accumulation phenomenon.

\paragraph{\textit{Greedy sparse approximation.}}

In sparse approximation and matching pursuit–type algorithms,
updates are also performed through greedy selection of atoms followed
by controlled updates.
If the effective contribution of new atoms is progressively diminished
through decaying coefficients, the resulting approximation may remain
trapped away from the target even when the dictionary contains the
exact representation.

From this perspective, the present work should not be viewed as
identifying a pathology specific to PRGA, but rather as revealing a
general structural risk associated with rapidly decaying learning-rate
schedules in stage-wise greedy methods.
Theorem~\ref{main} provides a concrete and analytically tractable
instance of this phenomenon.

\section{Discussion and implications}

We conclude with several remarks concerning the interpretation of
Theorem~\ref{main} and its implications for greedy learning methods.

\paragraph{\textit{Step-size schedules.}}

Theorem~\ref{main} shows that rapidly decaying step-size schedules
with $\alpha>1$ (so that $\sum_{m=1}^{\infty}\lambda_m<\infty$)
may lead to structural stagnation even in realizable problems.
This suggests a minimal structural requirement for stage-wise methods:
\[
\sum_{m=1}^\infty \lambda_m = \infty.
\]

For power schedules $\lambda_m = m^{-\alpha}$ this condition corresponds to
$\alpha \le 1$.
When $\alpha>1$, the total corrective capacity of the algorithm
remains bounded, and a non-vanishing bias may persist.

At the critical value $\alpha=1$, the accumulation is logarithmic,
allowing the algorithm to continue correcting the residual,
although progressively more slowly.
For $0<\alpha<1$, the accumulation grows polynomially,
allowing a more sustained correction over time,
while constant step sizes correspond to linear accumulation.

These observations suggest that, from a structural point of view,
step-size schedules satisfying $\alpha\le1$ are more appropriate when
exact recovery is desired in noiseless sparse learning settings.

\paragraph{\textit{Interaction with stochastic noise.}}

The stagnation phenomenon described in Theorem~\ref{main}
is derived in a noiseless realizable setting.
It is natural to ask how additive stochastic noise would interact
with this behavior.

Suppose that the observed target is
\[
y^\varepsilon = y + \varepsilon,
\]
where $\varepsilon$ denotes a perturbation (deterministic or random)
with $\mathbb E\|\varepsilon\|_2^2<\infty$.

Since the PRGA update is linear in the residual and the cumulative
step-size $\sum_m \lambda_m$ is finite for $\alpha>1$,
the total correction mass available to compensate both the signal
and the noise remains bounded.

Consequently the algorithm cannot completely eliminate either the
structural bias induced by finite accumulation or the noise component.
This suggests that overly aggressive step-size decay may lead to
persistent bias even in stochastic settings.

\section{Numerical experiments}

In this section we provide numerical evidence supporting the theoretical
stagnation result established in Theorem~\ref{main}. 
The goal of the experiments is to verify that, for $\alpha>1$,
the Power--Relaxed Greedy Algorithm (PRGA) fails to converge to zero
training error even in simple realizable sparse regression problems.
All simulations were implemented in Python.

\subsection{Experimental setup}

We consider a synthetic regression problem in $\mathbb R^n$ with $n=200$.
For each coherence level $\mu \in [0,0.95]$, we construct two unit vectors
$x_1, x_2 \in \mathbb R^n$ satisfying
\[
\langle x_1, x_2 \rangle = \mu.
\]

The upper limit $\mu\le 0.95$ simply avoids the nearly colinear regime
$\mu\approx 1$, where the two atoms become almost indistinguishable and
the geometry of the problem becomes numerically degenerate.

The target vector is defined as $$
y = (1-b)x_1 + b x_2,
\qquad b=0.25,$$
which lies exactly in the span of $\{x_1,x_2\}$.
Hence the regression problem is perfectly realizable and admits
a sparse two-atom representation.

We run PRGA over the symmetric dictionary $\mathcal D = \{\pm x_1, \pm x_2\},$
with step sizes $\lambda_m = m^{-\alpha},$ for different values of $\alpha$. Each experiment is run for $M=800$ iterations, which is sufficient
for the residual norm to stabilize and therefore approximates the
asymptotic behavior of the algorithm.

For $\alpha>1$ we also compute the theoretical lower bound predicted by
Theorem~\ref{main},
\[
b(1-\mu)\sqrt{\frac{1+\mu}{2}}\,P_\alpha,
\qquad
P_\alpha = \prod_{k=2}^{\infty}\left(1-\frac{1}{k^\alpha}\right).
\]

The infinite product $P_\alpha$ is approximated numerically using a
logarithmic summation of $\sum_{k=2}^N \log(1-k^{-\alpha})$ followed by
exponentiation, which provides a numerically stable approximation of the
product (see Appendix \ref{app} and \cite{Higham2002}).

\subsection{Stagnation as a function of coherence}
For $\alpha>1$ the residual norm remains strictly bounded away from zero,
in agreement with the theoretical prediction of
Theorem~\ref{main}.
Moreover, the empirical curves closely follow the theoretical lower bound,
confirming the quantitative dependence
\[
\inf_m \|r_m\|_2
\gtrsim
b(1-\mu)\sqrt{\frac{1+\mu}{2}}\,P_\alpha.
\]

As $\mu \to 0$ (nearly orthogonal features),
the stagnation level approaches the value predicted for the orthogonal case.
As $\mu$ increases, the lower bound decreases, but the non-convergence
phenomenon persists for all $\mu<1$.

\begin{figure}[h]
\centering
\includegraphics[width=0.65\linewidth]{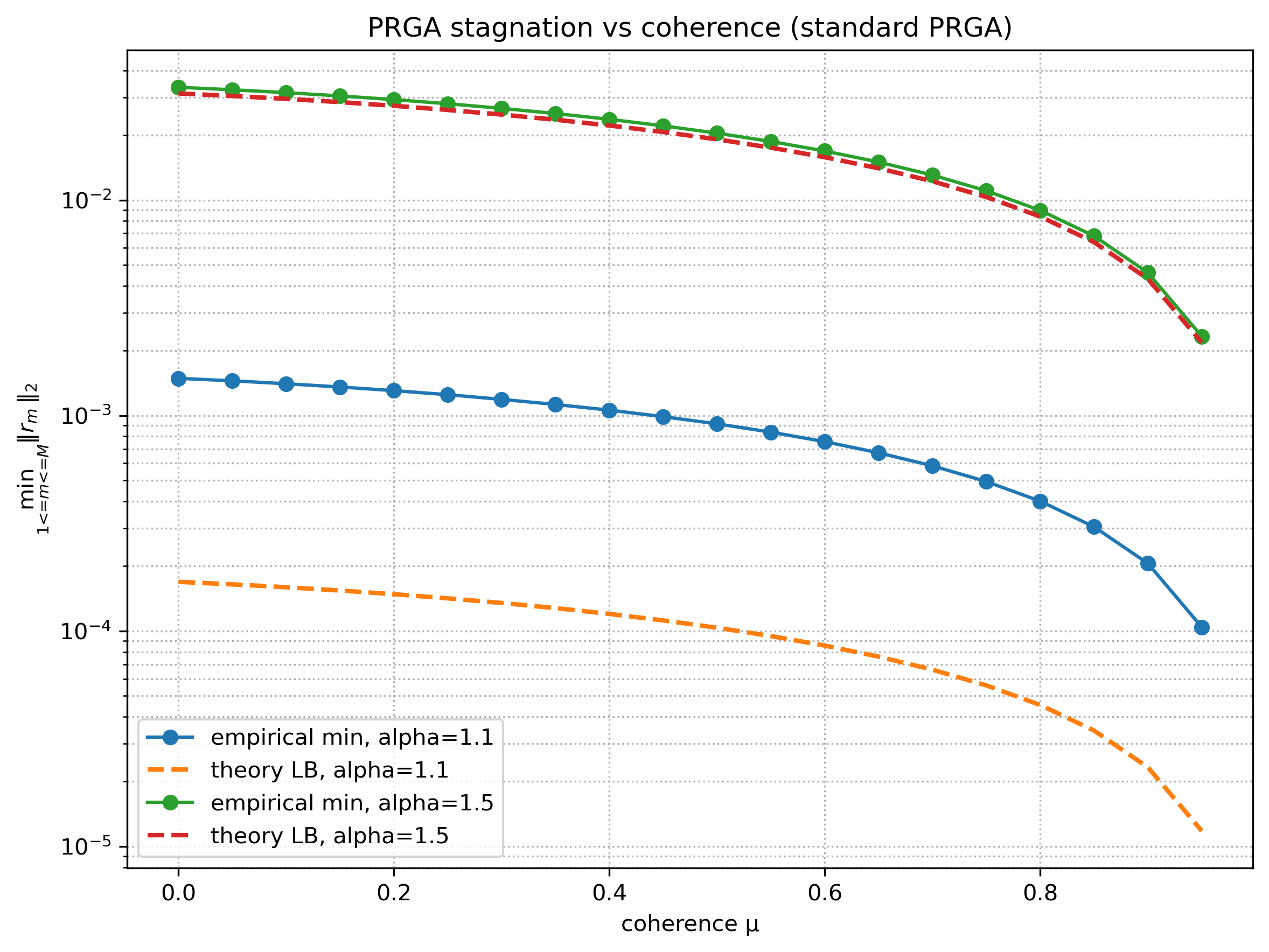}
\caption{
Minimum residual norm $\min_{1\le m \le M}\|r_m\|_2$
as a function of the coherence $\mu$ for $\alpha=1.1$ and $\alpha=1.5$.
Solid lines correspond to the empirical PRGA performance,
while dashed lines indicate the theoretical lower bound
$b(1-\mu)\sqrt{\frac{1+\mu}{2}}\,P_\alpha$.
}
\label{fig:stagnation_mu}
\end{figure}

\subsection{Dependence of stagnation on $\alpha$}

We next examine how the stagnation level varies as a function of the
decay parameter $\alpha$.
To this end we fix the coherence parameter $\mu=0.2$
and evaluate the final residual norm $\|r_M\|_2$
for $\alpha$ varying in the interval $[1.1,2.0]$.

The next figure compares the empirical residual
levels with the theoretical prediction derived from
Theorem~\ref{main}. The results show that the stagnation level increases with $\alpha$,
reflecting the fact that faster step-size decay reduces the cumulative
corrective capacity of the algorithm.
The empirical residual closely tracks the theoretical lower bound,
confirming that the infinite product $P_\alpha$ correctly predicts
the dependence of the stagnation floor on the decay parameter.

\begin{figure}[h]
\centering
\includegraphics[width=0.65\linewidth]{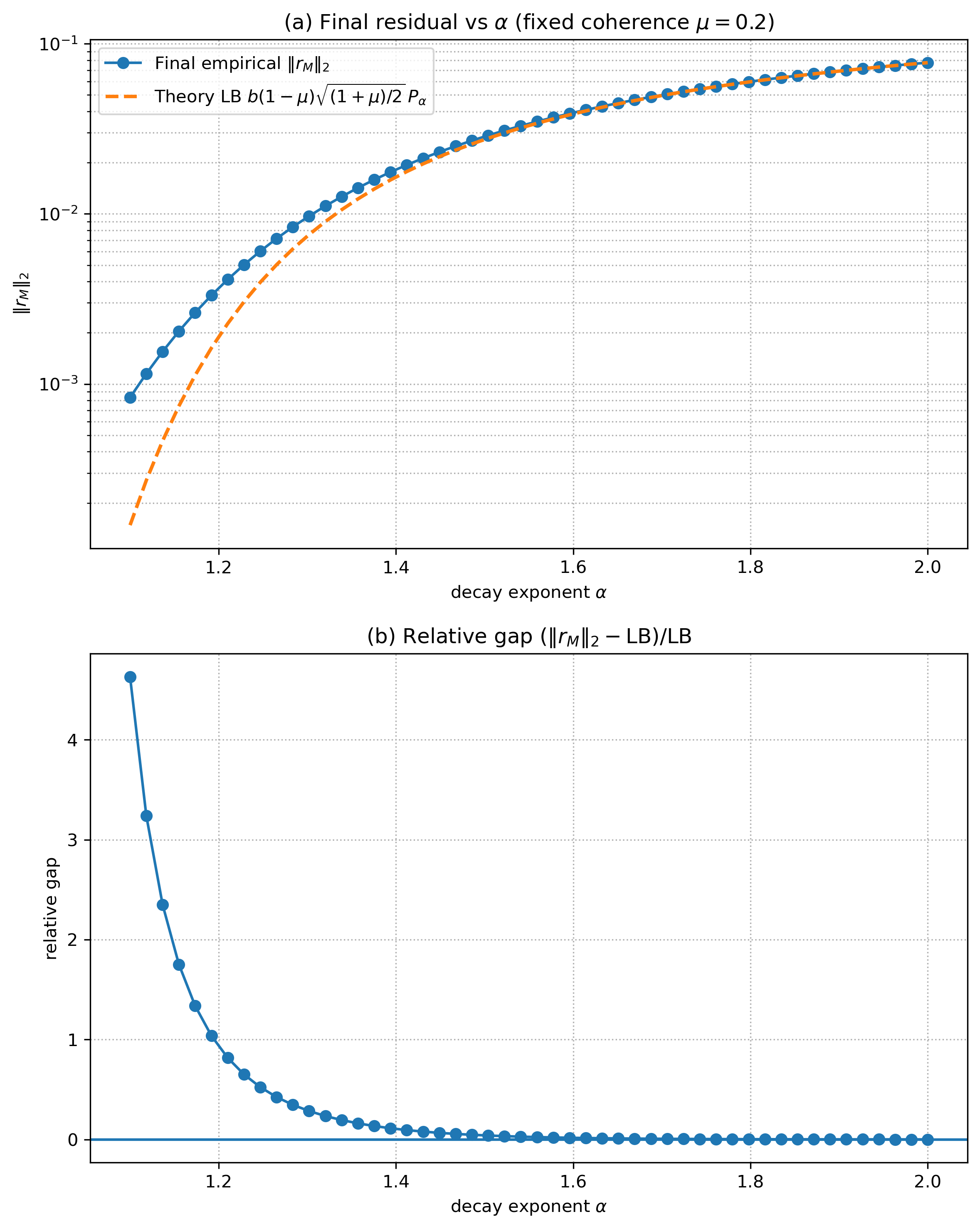}
\label{fig:stagnation_alpha}
\end{figure}
\section{Conclusion}

This work investigates the behavior of the Power--Relaxed Greedy Algorithm
(PRGA) under rapidly decaying step-size schedules.
Although the non-convergence of PRGA for $\alpha>1$ was previously known
in an abstract Hilbert space setting \cite{ref-prga}, our results show that
this phenomenon has a clear and concrete interpretation in sparse learning
problems.

We analyze a realizable regression model involving two atoms of prescribed
coherence and establish an explicit lower bound on the residual norm.
Theorem~\ref{main} shows that when the step sizes decay as
$\lambda_m=m^{-\alpha}$ with $\alpha>1$, the residual cannot converge to
zero and remains bounded away from zero by a quantity proportional to the
product
\[
P_\alpha=\prod_{k=2}^{\infty}\left(1-\frac{1}{k^\alpha}\right).
\]
This result demonstrates that the failure of convergence is not an
artifact of high-dimensional geometry or statistical noise, but rather a
structural consequence of finite cumulative step-size accumulation.

The analysis also provides a geometric interpretation of the phenomenon.
Through the atomic-norm framework, the iterates of PRGA remain confined to
a bounded region determined by the cumulative step sizes.
Combined with incoherence assumptions, this yields Euclidean lower bounds
for sparse residuals, illustrating how stagnation scales with the sparsity
level and the geometry of the dictionary.

Numerical experiments confirm the theoretical predictions.
In particular, the empirical residual closely follows the theoretical
lower bound as a function of both the coherence parameter $\mu$ and the
decay exponent $\alpha$, supporting the role of the product $P_\alpha$
as the key quantity governing the stagnation level.

From a broader perspective, these results highlight a structural trade-off
in stage-wise greedy learning methods.
While relaxation parameters are often introduced to improve stability,
an overly aggressive decay may limit the cumulative corrective capacity of
the algorithm.
Ensuring that the total step-size accumulation remains unbounded therefore
appears to be an important structural condition for reliable convergence
in greedy sparse approximation and related learning procedures.

\appendix
\section{Auxiliary results on the infinite product}\label{app}
To try to understand better Theorem \ref{main}, we present to basic lemmas. The following one could be founded in \cite{ref-garcia} about the positiviy of the inifinite product.
\begin{Lemma}
Let $\alpha>1$ and define
\[
P_\alpha := \prod_{k=2}^{\infty}\left(1-\frac{1}{k^\alpha}\right).
\]
Then $P_\alpha \in (0,1)$.
\end{Lemma}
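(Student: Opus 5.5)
The plan is to study the partial products $P_\alpha^{(N)} := \prod_{k=2}^{N}(1-k^{-\alpha})$ through their logarithms. Each factor satisfies $0<1-k^{-\alpha}<1$ for $k\ge 2$, so the sequence $P_\alpha^{(N)}$ is strictly decreasing and positive. Hence the limit $P_\alpha$ exists and lies in $[0,\,1-2^{-\alpha}]$. In particular $P_\alpha\le 1-2^{-\alpha}<1$, which settles the upper bound immediately. The real content is strict positivity, so the limit must not collapse to $0$.

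For the lower bound I take logarithms and write
\[
\log P_\alpha^{(N)}=\sum_{k=2}^{N}\log\bigl(1-k^{-\alpha}\bigr).
\]
I then use the elementary inequality $\log(1-x)\ge -\frac{x}{1-x}$ for $x\in[0,1)$. This follows, for instance, from $\log(1-x)=-\log\bigl(1+\frac{x}{1-x}\bigr)$ together with $\log(1+t)\le t$. For $k\ge2$ we have $x=k^{-\alpha}\le 2^{-\alpha}\le \tfrac12$, so $\frac{x}{1-x}\le 2x$. This gives $\log(1-k^{-\alpha})\ge -2k^{-\alpha}$.

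Summing, I get $\log P_\alpha^{(N)}\ge -2\sum_{k=2}^{\infty}k^{-\alpha}=-2(\zeta(\alpha)-1)$. The series is finite because $\alpha>1$, by the $p$-series test or the integral comparison $\sum_{k\ge2}k^{-\alpha}\le\int_1^\infty t^{-\alpha}\,dt=\frac{1}{\alpha-1}$. Therefore $P_\alpha^{(N)}\ge \exp\bigl(-\tfrac{2}{\alpha-1}\bigr)$ for every $N$. Passing to the limit, which is allowed because the bound is uniform in $N$, yields $P_\alpha\ge e^{-2/(\alpha-1)}>0$.

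The only step needing care is this uniform lower bound, specifically choosing a clean inequality for $\log(1-x)$ that is valid on the whole range $x\le 2^{-\alpha}$. The rest is monotone convergence and convergence of the $p$-series. As a side benefit, the proof gives the explicit estimate $e^{-2/(\alpha-1)}\le P_\alpha\le 1-2^{-\alpha}$. This also justifies the logarithmic-summation procedure used to compute $P_\alpha$ numerically in the experiments.
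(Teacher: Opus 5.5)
Your proof is correct and follows essentially the same route as the paper: take logarithms and compare $\sum_k \log(1-k^{-\alpha})$ with the convergent series $\sum_k k^{-\alpha}$. Where the paper uses the asymptotic expansion $\log(1-x)=-x+O(x^2)$, your explicit inequality $\log(1-x)\ge -2x$ on $[0,1/2]$ makes the needed uniformity explicit, and together with the monotonicity of the partial products it yields the quantitative bounds $e^{-2/(\alpha-1)}\le P_\alpha\le 1-2^{-\alpha}$.
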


\begin{proof}
Since $\alpha>1$, the series
\[
\sum_{k=2}^{\infty}\frac{1}{k^\alpha}
\]
converges. 

For $0<x<1$, we have the expansion
\[
\log(1-x) = -x + O(x^2),
\]
hence
\[
\sum_{k=2}^{\infty}
\log\!\left(1-\frac{1}{k^\alpha}\right)
= -\sum_{k=2}^{\infty}\frac{1}{k^\alpha}
+ O\!\left(\sum_{k=2}^{\infty}\frac{1}{k^{2\alpha}}\right).
\]
Both series on the right-hand side converge, so the logarithmic series converges to a finite negative number. Therefore,
\[
P_\alpha
= \exp\!\left(
\sum_{k=2}^{\infty}
\log\!\left(1-\frac{1}{k^\alpha}\right)
\right)
\in (0,1).
\]
\end{proof}

The next one is a lemma talking about the stable logarithmic computation of $P_\alpha$ that we have used in the numerical code of python (see \cite{zenodo}).

\begin{Lemma}
Let $\alpha>1$. For $K\ge2$ define
\[
P_{\alpha,K} := \prod_{k=2}^{K}\left(1-\frac{1}{k^\alpha}\right).
\]
Then
\[
\log P_{\alpha,K}
=
\sum_{k=2}^{K}
\log\!\left(1-\frac{1}{k^\alpha}\right),
\]
and the truncation error satisfies
\[
\left|
\log P_\alpha - \log P_{\alpha,K}
\right|
\le
\sum_{k=K+1}^{\infty}\frac{1}{k^\alpha}.
\]
In particular, the logarithmic summation provides a numerically stable approximation of $P_\alpha$.
\end{Lemma}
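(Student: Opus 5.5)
The identity for $\log P_{\alpha,K}$ is immediate: every factor $1-k^{-\alpha}$ with $k\ge2$ lies in $(0,1)$, so it has a logarithm, and the logarithm of a finite product is the sum of the logarithms. For the truncation error, I would first use the previous lemma: the partial products $P_{\alpha,K}$ decrease in $K$ to $P_\alpha\in(0,1)$. By continuity of $\log$ on $(0,\infty)$ this gives
\[
\log P_\alpha=\sum_{k=2}^\infty\log\bigl(1-k^{-\alpha}\bigr),
\qquad
\log P_{\alpha,K}-\log P_\alpha=\sum_{k=K+1}^\infty -\log\bigl(1-k^{-\alpha}\bigr)\ge 0 .
\]
So the whole problem reduces to bounding the tail $\sum_{k>K}-\log(1-k^{-\alpha})$ by $\sum_{k>K}k^{-\alpha}$.

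This is the main obstacle, and the claimed bound is false as stated. For $x\in(0,1)$ the inequality runs the other way: $-\log(1-x)=x+x^2/2+\dots\ge x$. Hence the tail is at least $\sum_{k>K}k^{-\alpha}$, not at most, and the inequality cannot hold termwise. A concrete check: for $\alpha=2$ one has $P_2=1/2$ and $P_{2,K}=\frac{K+1}{2K}$, so the error is $\log(1+1/K)$, while $\sum_{k>K}k^{-2}<1/K$. At $K=2$ this is $\log(3/2)\approx0.405$ against $\sum_{k\ge3}k^{-2}\approx0.395$. So the error exceeds the proposed bound, and the inequality fails.

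What I would actually prove is the corrected estimate. Use $-\log(1-x)\le \frac{x}{1-x}$ for $x\in(0,1)$, which follows from $\log(1+t)\le t$ with $t=\frac{x}{1-x}$. With $x=k^{-\alpha}\le 2^{-\alpha}$ for $k>K\ge2$ this gives
\[
0\le \log P_{\alpha,K}-\log P_\alpha\le \frac{1}{1-(K+1)^{-\alpha}}\sum_{k=K+1}^\infty k^{-\alpha}\le \frac{1}{1-2^{-\alpha}}\sum_{k=K+1}^\infty k^{-\alpha}.
\]
Equivalently, the error is $\sum_{k>K}k^{-\alpha}\bigl(1+O(K^{-\alpha})\bigr)$, so the stated bound holds only up to a factor tending to $1$. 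One can then add the integral estimate $\sum_{k>K}k^{-\alpha}\le K^{1-\alpha}/(\alpha-1)$ to make the bound explicit.

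Finally, the stability claim follows from the shape of the computation. Each summand $\log(1-k^{-\alpha})$ is a well-conditioned quantity, best evaluated as \texttt{log1p}$(-k^{-\alpha})$. The relative error in $P_{\alpha,K}=\exp(\log P_{\alpha,K})$ is then controlled by the absolute error in the sum, and this avoids the underflow and accumulated rounding of multiplying many factors close to $1$.
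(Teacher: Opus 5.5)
Your verdict is correct, and your argument shows more than you claim. Since $-\log(1-x)>x$ strictly for $x\in(0,1)$, you have $\log P_{\alpha,K}-\log P_\alpha=\sum_{k>K}-\log(1-k^{-\alpha})>\sum_{k>K}k^{-\alpha}$. So the stated bound fails for \emph{every} $\alpha>1$ and $K\ge2$, not only at $\alpha=K=2$. One small point about your example: the remark $\sum_{k>K}k^{-2}<1/K$ proves nothing by itself, because $\log(1+1/K)<1/K$ as well. What settles the example is the numerical comparison at $K=2$, or the strict termwise inequality.

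The paper's proof does not establish the constant $1$ either. It invokes $|\log(1-x)|\le Cx$ with an unspecified universal constant and then declares the claim proved. What it actually shows is $|\log P_\alpha-\log P_{\alpha,K}|\le C\sum_{k>K}k^{-\alpha}$. That auxiliary inequality also cannot hold with a universal $C$ on all of $(0,1)$, since $-\log(1-x)/x\to\infty$ as $x\to1$. It needs the restriction $x=k^{-\alpha}\le 2^{-\alpha}<1/2$, where for example $C=2\log 2$ works.

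Your estimate via $-\log(1-x)\le x/(1-x)$ is the same termwise comparison done sharply. It gives the explicit factor $1/(1-(K+1)^{-\alpha})$, which tends to $1$. Combined with your lower bound, it shows that a factor larger than $1$ is unavoidable but can be taken arbitrarily close to $1$ for large $K$. The paper's cruder version only supports the qualitative conclusion that the truncation error decays like the tail of $\sum k^{-\alpha}$, which is all the numerical use requires. Your version is the correct quantitative form of the lemma.
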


\begin{proof}
Taking logarithms converts the product into a sum.
Since for $0<x<1$ we have $|\log(1-x)| \le Cx$ for some universal constant $C$, the tail satisfies
\[
\sum_{k=K+1}^{\infty}
\left|
\log\!\left(1-\frac{1}{k^\alpha}\right)
\right|
\le
C \sum_{k=K+1}^{\infty}\frac{1}{k^\alpha}.
\]
Because $\alpha>1$, the right-hand side converges and decays as $K\to\infty$, proving the claim.
\end{proof}

\textbf{Funding:} The author is supported by the grant PID2022-142202NB-I00 (Agencia Estatal de Investigación, Spain).

\end{document}